\documentclass[conference]{IEEEtran}
\IEEEoverridecommandlockouts
\usepackage{cite}
\usepackage{amsmath,amssymb,amsfonts}
\usepackage{algorithm}
\usepackage{algorithmic}
\usepackage{graphicx}
\usepackage{textcomp}
\usepackage{xcolor}
\usepackage{booktabs}
\usepackage{url}

\def\BibTeX{{\rm B\kern-.05em{\sc i\kern-.025em b}\kern-.08em
    T\kern-.1667em\lower.7ex\hbox{E}\kern-.125emX}}

\newcommand{\fbar}{\bar{f}}
\newcommand{\qt}{\tilde{q}}
\newcommand{\epst}{\tilde{\varepsilon}}
\newcommand{\norm}[1]{\lVert #1 \rVert}
\DeclareMathOperator*{\argmin}{arg\,min}
\DeclareMathOperator{\softmax}{softmax}

\begin{document}

\title{OT-CAL: Optimal-Transport-Guided Adaptive Sample Selection for Efficient Visual Learning}
\author{\IEEEauthorblockN{Suryansh Talukdar}
\IEEEauthorblockA{\textit{ Department of Computer Science} \\
\textit{Purdue University Fort Wayne}\\
Fort Wayne, Indiana, USA  \\
talusb01@pfw.edu}
\and
\IEEEauthorblockN{Jin Soung Yoo}
\IEEEauthorblockA{\textit{ Department of Computer Science} \\
\textit{Purdue University Fort Wayne}\\
Fort Wayne, Indiana, USA \\
jyoo@pfw.edu}

}

\maketitle

\begin{abstract}
Deep neural networks for image recognition are typically trained using mini-batches, with a forward pass to compute predictions and losses followed by a more computationally expensive backward pass to update model parameters. However, not all training samples contribute equally to learning, suggesting that computation can be reduced by performing backward passes only on informative samples. The challenge is that sample informativeness changes throughout training: difficult samples may be valuable early on, improving samples become important later, and class diversity becomes increasingly relevant toward the end. Existing methods often rely on fixed selection rules or computationally expensive per-sample gradient calculations. We propose OT-CAL, an adaptive data selection method that automatically adjusts its sampling preferences during training. OT-CAL represents each sample using five informativeness features, including difficulty, learnability, and uncertainty. A curriculum vector describes the currently preferred sample characteristics, and optimal-transport-based matching assigns selection weights according to how closely samples match this target. The curriculum vector is then updated using the selected samples, forming a feedback loop that adapts without a manually defined schedule. Experiments on three standard visual learning benchmarks show that, when selecting 50\% of each mini-batch, OT-CAL improves performance by 1.3--2.1 points over AdaSelection while achieving 1.18--1.24 times faster training than full-data training.
\end{abstract}

\begin{IEEEkeywords}
coreset selection, curriculum learning, data-efficient training, optimal transport, Sinkhorn algorithm, subset selection
\end{IEEEkeywords}

\section{Introduction}

Deep neural networks for visual learning are typically trained through iterative updates using mini-batches of images. At each iteration, a forward pass computes predictions and corresponding losses, while a backward pass computes gradients to update the model parameters. The change that a single image asks for is called that image's gradient. The backward pass usually costs about twice as much as the forward pass, and standard training performs it for every image at every step.

Yet not every image is equally useful at every moment. Partway through training, many images are already predicted correctly and confidently. Their loss is small, so their gradients barely change the weights, but they still cost as much to process as any other image \cite{paul2021,toneva2019}. This suggests an opportunity to reduce computation: perform the forward pass on the entire mini-batch, use the resulting predictions and losses to identify informative samples, and perform the backward pass only on those selected samples. This strategy is called online batch selection \cite{loshchilov2015,jiang2019}, where ``online'' means that the choice is made afresh at every training step. Keeping half of each mini-batch roughly halves the cost of the backward pass.

The central challenge is determining which samples to select, because their informativeness changes throughout training. At the start, difficult samples may provide strong learning signals because their losses are high. Once the network has learned the easy patterns, samples whose losses are still decreasing may become more informative, while already-mastered samples contribute relatively little and persistently difficult samples may provide limited benefit. Near the end, maintaining class diversity can help prevent the selection process from concentrating on a small set of difficult examples. A good selection rule should therefore change as training progresses.

The idea that the order in which a model sees its examples matters is the basis of curriculum learning \cite{bengio2009,kumar2010,wang2022}. \emph{Curriculum learning} organizes training examples according to a learning strategy, often adjusting their difficulty or importance as training progresses. In this paper, we use the word curriculum for any preference over training examples that changes during training.

Existing methods solve parts of this problem, but not all of it at low cost. They fall into three groups.

\textbf{Coreset methods} \cite{mirzasoleiman2020,killamsetty2021glister,killamsetty2021gradmatch} pick a small subset of images whose gradients, added together, closely match the gradients of the full dataset. Gradient-matching approaches can provide representative subsets, but computing and storing per-sample gradients introduces substantial computational and memory overhead. Some methods also require expensive pairwise comparisons. For a ResNet-50 network on the Tiny-ImageNet dataset (100,000 images), storing these gradients alone would take more than 100~GB.

\textbf{Score-based methods} \cite{loshchilov2015,mindermann2022,zhang2023,jiang2019} are much cheaper. They give each image a score, such as its loss, or combine several such scores, and keep the highest-scoring images. However, methods based on fixed scoring rules may not adequately account for changes in sample informativeness during training.

\textbf{Curriculum methods} do change their preference over time, but something has to decide when to move from one stage to the next. This is usually a timetable written by hand \cite{hacohen2019}, a second ``teacher'' network trained to choose the examples \cite{fan2018,jiang2018}, or a separately learned controller \cite{graves2017}. These approaches may require predefined schedules, auxiliary models, or additional tuning across datasets.

These limitations motivate a computationally efficient selection method that adapts to the model's evolving learning needs without requiring a manually specified curriculum schedule.

This paper presents OT-CAL (Optimal-Transport Curriculum Adaptive Learning), which meets this need by letting the selection steer itself. It has three parts.

\textbf{Part 1: Sample informativeness profiles.} Each image is described by five scores, all computed from the forward pass without requiring per-sample backward passes:
\begin{itemize}
\item \textbf{Difficulty:} how large the image's loss is compared with the other images in the mini-batch.
\item \textbf{Learnability:} how much the image's loss has fallen compared with its recent past.
\item \textbf{Gradient proxy:} how far the network's predicted probabilities are from the correct answer. This gives a cheap estimate of how large the image's gradient is, without computing the gradient itself.
\item \textbf{Uncertainty:} how unsure the network is about its prediction.
\item \textbf{Variety:} how rare the image's class is in the current mini-batch.
\end{itemize}
Together, these five features form a sample informativeness \emph{profile}. No gradients for individual images are computed or stored.

\textbf{Part 2: Adaptive curriculum target.} OT-CAL keeps one more profile, called the curriculum vector. It has the same five numbers, but instead of describing a particular image, it describes the preferred characteristics of training samples at the current stage of learning. At each step, the images whose profiles best match the curriculum vector are kept for the backward pass.

\textbf{Part 3: Feedback-based curriculum update.} After each step, the curriculum vector moves a small step toward the weighted average profile of the images just selected: the new curriculum vector is 95\% the old one plus 5\% the new average. This kind of update is called an exponential moving average (EMA). Because each update is small, the target changes gradually and is not thrown off by any single mini-batch.

Together, these parts form a feedback loop: the curriculum vector decides which images are selected, and the selected images then move the curriculum vector. As the network learns, sample profiles evolve with changes in prediction errors, learnability, and uncertainty. The curriculum vector follows these changes, allowing selection preferences to adapt throughout training.

A key design question is how to match samples to the curriculum target and determine their influence on subsequent curriculum updates. The obvious way is to sort the images by their distance from the curriculum vector, keep the nearest half, and give each of them an equal say in the update. However, uniform weighting can make the feedback sensitive to changes near the selection threshold. An image just inside the cut-off is fully kept and one just outside is fully dropped, so a tiny change in the scores can swap images in and out. Each swap changes the average profile, which moves the target, which can cause more swaps, potentially introducing abrupt changes in the curriculum update.

OT-CAL addresses this issue using entropy-regularized optimal transport (OT), a mathematical framework for matching distributions according to a transportation cost \cite{cuturi2013,peyre2019}. In OT-CAL, the cost measures the distance between each sample's informativeness profile and the curriculum target. The resulting matching weights quantify how closely samples match the target. Images whose profiles fit the target well receive more weight.

Entropy regularization encourages a smoother distribution of matching weights, reducing their sensitivity to small changes in sample profiles or the curriculum target. OT-CAL keeps the images with the largest weights and uses the same weights when it updates the target, so images near the cut-off, which carry little weight, barely move it. In the single-target formulation, selecting the largest matching weights is equivalent to selecting the samples nearest to the target. The benefit of OT-based weighting therefore lies primarily in how the selected samples influence the subsequent curriculum update, rather than in changing their ranking. Computing the weights takes about 1.2~ms per mini-batch of 128 images (Section~\ref{sec:tradeoff}).

Because OT-CAL decides what the network learns from, two questions naturally arise. First, how closely does the gradient computed from selected samples approximate the full mini-batch gradient? Second, under what conditions does the curriculum feedback remain stable? Section~\ref{sec:theory} answers both. First, the matching weights change smoothly: a small change in the scores or the target causes only a small change in the weights. Second, the gap between the combined gradient of the selected images and that of the full mini-batch is bounded: it depends on how well the five scores reflect each image's actual gradient and on how different the selected images' scores are from the batch's, and it shrinks as more of the batch is kept. Third, we establish a sufficient condition for contraction of the curriculum update, while recognizing that this condition is not guaranteed for all settings. The second result suggests that better scores should give better training. We test this with a variant in which a small neural network learns an eight-number profile that tracks gradients more closely; this variant gives our best results.

We test OT-CAL on three standard benchmarks: CIFAR-100 and Tiny-ImageNet, for image classification, and COCO, for object detection. We use two widely used classification networks, ResNet and the Vision Transformer (ViT), and the DETR object detector. At a 50\% sampling ratio, OT-CAL improves performance over AdaSelection \cite{zhang2023} by 1.3--2.1 percentage points across the evaluated settings and achieves $1.18$--$1.24\times$ faster training than full-data training. For CIFAR-100, the reported accuracy uses the learned 8-D profile, whereas the speedup is measured using the fixed 5-D profile. (On COCO, accuracy is measured by mean average precision, the standard score for object detection.) Ablation studies show that learnability contributes most strongly among the five informativeness features, while OT-based weighting provides modest but consistent improvements over uniform top-$k$ selection using the same profiles.

In summary, this paper makes three contributions:
\begin{itemize}
\item \textbf{Adaptive data selection:} We propose OT-CAL, which integrates curriculum learning and online batch selection through an evolving curriculum target. The selection preferences adapt to training dynamics without requiring a predefined curriculum schedule or per-sample gradient computation.
\item \textbf{Theoretical analysis:} We analyze the stability of transport-based matching weights, establish a bound on the gradient bias introduced by subset selection, and characterize conditions under which the curriculum update is contractive.
\item \textbf{Experimental evaluation:} We evaluate OT-CAL on three visual learning benchmarks using multiple model architectures, demonstrating improved accuracy--efficiency tradeoffs over existing selection methods. Ablation studies examine the contributions of individual informativeness features, learned representations, and transport-based weighting.
\end{itemize}

\section{Related Work}

\subsection{Coreset and Online Batch Selection}
Coresets are weighted subsets that provably approximate full-data objectives \cite{feldman2011,bachem2017}. In deep learning, CRAIG \cite{mirzasoleiman2020} greedily covers the gradient space, GLISTER \cite{killamsetty2021glister} selects subsets through a bilevel generalization objective, and GRAD-MATCH \cite{killamsetty2021gradmatch} matches the full-data gradient with orthogonal matching pursuit. Geometric coresets \cite{sener2018} and training-dynamics scores such as EL2N, forgetting events, and data maps \cite{paul2021,toneva2019,swayamdipta2020} prune data before or early in training; DeepCore \cite{guo2022} compares many of these methods. Online methods instead select within each mini-batch: selective backpropagation skips the backward pass for low-loss samples \cite{jiang2019}, importance sampling reduces gradient variance \cite{loshchilov2015,katharopoulos2018}, RHO-LOSS selects points that are learnable and not yet learned using a holdout model \cite{mindermann2022}, and AdaSelection aggregates several scoring rules \cite{zhang2023}. Gradient-matching methods need per-sample gradients and pairwise computation, while score-aggregation methods use weights that do not adapt to the phase of training.

\subsection{Curriculum Learning}
Curriculum learning \cite{bengio2009} and self-paced learning \cite{kumar2010} present examples in a structured order to stabilize optimization; \cite{wang2022} surveys the area. Schedules may be predefined \cite{hacohen2019}, produced by a teacher network \cite{fan2018,jiang2018}, or chosen by a bandit over tasks \cite{graves2017}. These approaches require either domain knowledge to set pacing functions or an auxiliary learner, and convergence guarantees for curriculum-driven sampling remain limited.

\subsection{Optimal Transport in Machine Learning}
Optimal transport compares probability measures through the minimal cost of moving mass between them \cite{peyre2019}. Entropic regularization and the Sinkhorn algorithm \cite{cuturi2013} make OT fast and differentiable, and the entropic solution varies smoothly with its inputs \cite{genevay2016,luise2018}. OT has been applied to domain adaptation \cite{courty2017,damodaran2018,shen2018} and mini-batch alignment \cite{fatras2021}, and gradient matching has been used for dataset condensation \cite{zhao2021}. To our knowledge, OT has not been used to couple an evolving curriculum with online subset selection.

\subsection{Positioning}
OT-CAL differs from prior work in three respects: (i) it formulates subset selection as distributional matching in a compact profile space, unifying curriculum and coreset selection; (ii) the target distribution evolves through closed-loop EMA feedback, so phases need not be scheduled; and (iii) it pairs $O(N\log N)$ per-batch cost with explicit stability and gradient-bias guarantees.

\section{Method}\label{sec:method}

\subsection{Overview}\label{sec:overview}
OT-CAL repeats a select--train--update loop at every training iteration $t$ (Fig.~\ref{fig:loop}, Algorithm~\ref{alg:otcal}). A curriculum vector $c_t$ describes the kind of sample the model should learn from at step $t$. (i) \emph{Select:} the forward pass yields an informativeness profile $f_i$ for every sample in the mini-batch (Section~\ref{sec:profile}), and entropic OT matches these profiles to $c_t$ and keeps the $k$ best-matched samples, $S_t$ (Section~\ref{sec:ot}). (ii) \emph{Train:} the backward pass and weight update use only $S_t$. (iii) \emph{Update:} the curriculum moves a small step toward the profiles that were just selected:
\begin{equation}
c_{t+1} = (1-\eta_c)\,c_t + \eta_c\,\fbar_t,\label{eq:ema}
\end{equation}
where $\fbar_t = \sum_{i\in S_t} \qt_i f_i$ is the mean profile of the selected samples, weighted by their normalized matching weights $\qt_i$ (Section~\ref{sec:ot}), and $\eta_c = 0.05$ is the curriculum update rate. Update \eqref{eq:ema} is an exponential moving average (EMA): each new curriculum is 95\% the previous one and 5\% the latest selection. It is the feedback mechanism of OT-CAL: $c_t$ decides which samples are selected, and the selected samples move $c_t$. Because the profiles change as the model learns, $c_t$ keeps moving, so the preference over samples changes during training without an epoch-based schedule. The curriculum is initialized from the data as the mean profile of the first mini-batch, $c_0 = \frac{1}{N}\sum_{i=1}^{N} f_i$.

\begin{figure}[t]
\centerline{\includegraphics[width=0.95\columnwidth]{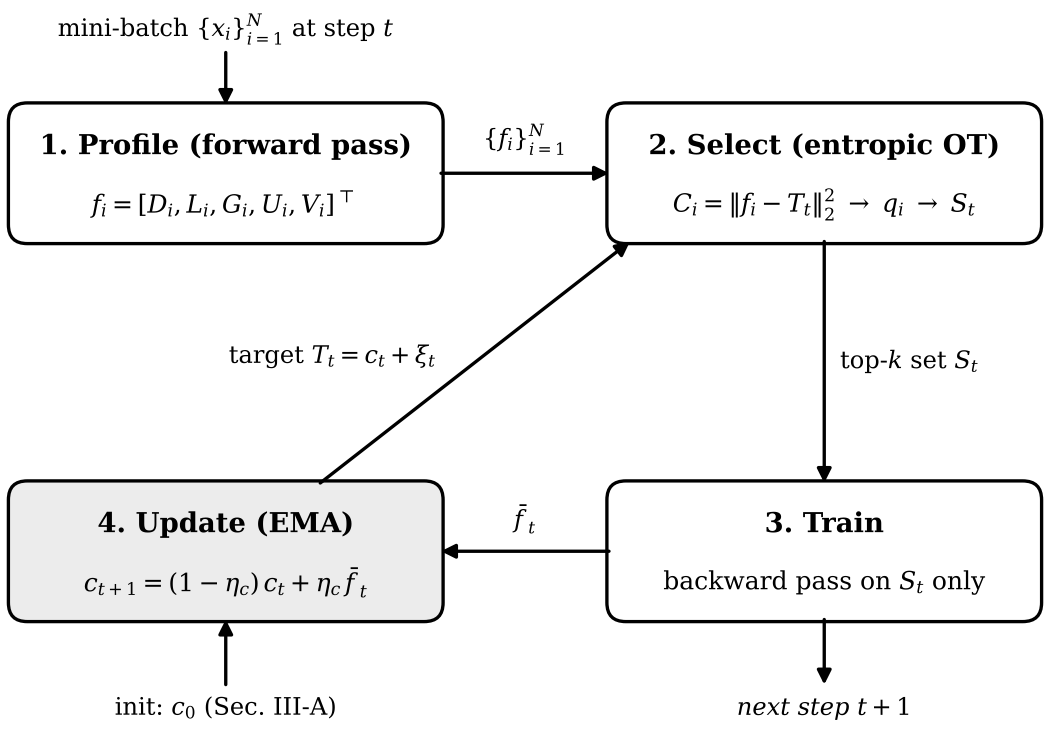}}
\caption{The OT-CAL loop at iteration $t$. The curriculum vector $c_t$ sets the target for selection; the selected subset $S_t$ is used for the backward pass and then moves $c_t$ through the EMA update \eqref{eq:ema}.}
\label{fig:loop}
\end{figure}

\subsection{Sample Informativeness Profile}\label{sec:profile}
Each sample $i$ is described by a five-dimensional profile $f_i = [D_i, L_i, G_i, U_i, V_i]^\top$, computed from the forward pass only. Let $\ell_i$ be the loss of sample $i$, $p_i$ its predicted distribution over $K$ classes, and $y_i$ its one-hot label.
\begin{itemize}
\item \emph{Difficulty} $D_i = \ell_i / \max_j \ell_j$: the loss relative to the largest loss in the batch. High-loss samples give strong gradients early in training \cite{bengio2009}.
\item \emph{Learnability} $L_i = \tanh\!\big((\ell_i^{\mathrm{EMA}} - \ell_i)/\ell_i^{\mathrm{EMA}}\big)$: the relative loss improvement over $\ell_i^{\mathrm{EMA}}$, an EMA of the losses of sample $i$ on its earlier visits (decay 0.9 per visit, initialized to the first observed loss, so $L_i = 0$ on a first visit). It separates samples that are still improving from stagnant or already-learned ones \cite{toneva2019,mindermann2022}.
\item \emph{Gradient proxy} $G_i = \norm{p_i - y_i}_2$: the norm of the cross-entropy gradient with respect to the logits, a cheap stand-in for the per-sample gradient norm \cite{paul2021,katharopoulos2018}.
\item \emph{Uncertainty} $U_i = -\sum_{m=1}^{K} p_{im}\log p_{im}$: the predictive entropy, which is high for samples near decision boundaries.
\item \emph{Variety} $V_i = 1/n_i^{B}$, where $n_i^{B}$ is the number of samples of the same class as $i$ in the current mini-batch: it favors classes that are under-represented in the batch. A dataset-level class frequency would be constant here, because CIFAR-100 and Tiny-ImageNet are exactly class-balanced; at batch size 128 on CIFAR-100, a sample's class appears 1, 2, 3, or $\geq 4$ times in its batch for about 28\%, 36\%, 23\%, and 13\% of samples.
\end{itemize}

\emph{Scaling.} The raw features have different ranges (e.g., $U_i \in [0,\log K]$ but $D_i\in[0,1]$), so each is rescaled to $[0,1]$ before matching; otherwise the widest-range feature would dominate the cost in Section~\ref{sec:ot}. $L_i \in (-1,1)$ is mapped to $(L_i+1)/2$, $G_i$ is divided by its maximum $\sqrt{2}$, $U_i$ by $\log K$, and $V_i$ is min--max scaled within the batch.

\emph{Detection.} For DETR, each image is one sample. $\ell_i$ is its Hungarian-matched set-prediction loss, from which $D_i$ and $L_i$ follow as above; $G_i$ and $U_i$ average the gradient proxy and the normalized entropy over the queries matched to ground-truth objects; and $V_i$ averages the batch inverse class frequency over the image's objects.

Each feature has individual precedent \cite{paul2021,toneva2019,mindermann2022}; the contribution of OT-CAL is to combine them into a target that moves during training. The curriculum vector lives in the same space, $c_t = [D_t^{*}, L_t^{*}, G_t^{*}, U_t^{*}, V_t^{*}]^\top$, where each entry is the preferred level of that feature at step $t$. Learnability needs one loss-EMA scalar per training example; no per-sample gradients are stored.

\emph{Learned 8-D profile.} Theorem~\ref{thm:bias} (Section~\ref{sec:theory}) predicts that selection is more accurate when distances between profiles track distances between gradients. We therefore also use a learned variant. A two-layer MLP $g:\mathbb{R}^5\to\mathbb{R}^8$ (hidden width 64, ReLU) maps the scaled profile to an embedding $z_i = g(f_i)$, and a linear head $h:\mathbb{R}^8\to\mathbb{R}$ predicts the per-sample gradient norm from $z_i$. The regression targets are last-layer gradient norms, which have the closed form $\norm{\nabla_W \ell_i} = \norm{p_i - y_i}_2\,\norm{a_i}_2$, with $a_i$ the penultimate-layer activation, so they come from the forward pass without a per-sample backward pass. $g$ and $h$ are trained online with their own Adam optimizer (learning rate $10^{-3}$), one MSE step per mini-batch, with gradients stopped at the network. In this variant, $z_i$ replaces $f_i$ everywhere: the cost in \eqref{eq:cost}, the curriculum vector $c_t\in\mathbb{R}^8$ (initialized to the first-batch mean of $z_i$), and update \eqref{eq:ema}; nothing else changes.

\subsection{OT-Based Sample Selection}\label{sec:ot}
Selection compares each profile with a perturbed target $T_t = c_t + \xi_t$, where $\xi_t \sim \mathcal{N}(0, \sigma^2 I)$ and $\sigma = 0.05$; the noise keeps the loop from locking onto a fixed point. The matching cost of sample $i$ is its squared distance from the target:
\begin{equation}
C_i = \norm{f_i - T_t}_2^2,\qquad i = 1,\dots,N.\label{eq:cost}
\end{equation}
The mini-batch is the source distribution, with mass $a_i = 1/N$ on each of its $N$ samples, and the target is a single point $T_t$ with mass $b = 1$. A transport plan $P\in\mathbb{R}_+^N$ sends mass $P_i$ from sample $i$ to the target, and we solve
\begin{equation}
P^{*} = \argmin_{P\in\Pi_b}\; \langle P, C\rangle - \varepsilon H(P) + \tau\,\mathrm{KL}(P\,\|\,a),\label{eq:ot}
\end{equation}
where $\Pi_b = \{P\in\mathbb{R}_+^N : \sum_i P_i = b\}$ fixes only the target mass, $\langle P, C\rangle = \sum_i P_i C_i$ is the total transport cost, $H(P) = -\sum_i P_i \log P_i$ is the entropy of the plan, $\mathrm{KL}(P\,\|\,a) = \sum_i \big(P_i\log(P_i/a_i) - P_i + a_i\big)$ penalizes deviation from the source marginal, $\varepsilon = 0.01$, and $\tau > 0$.

\emph{Why the source marginal is relaxed.} If the plan had to match both marginals exactly (balanced OT), the only feasible plan for a single target would be $P_i = a_i = 1/N$ for every $i$, whatever the costs, and the plan could not rank samples. Problem \eqref{eq:ot} therefore keeps the target constraint $\sum_i P_i = b = 1$ and replaces the source constraint by the penalty $\tau\,\mathrm{KL}(P\,\|\,a)$, as in unbalanced OT \cite{chizat2018}. A sample that is expensive to move can then send less than $1/N$ of mass. Setting the derivative of \eqref{eq:ot} to zero gives $P_i \propto a_i^{\tau/(\varepsilon+\tau)}\exp\!\big(-C_i/(\varepsilon+\tau)\big)$; with uniform $a$ the first factor is constant, so the optimal plan has the closed form
\begin{equation}
q_i := P^{*}_i = \frac{\exp(-C_i/\epst)}{\sum_j \exp(-C_j/\epst)},\qquad \epst = \varepsilon + \tau.\label{eq:closed}
\end{equation}
With a single target point, unbalanced Sinkhorn scaling \cite{cuturi2013,chizat2018} reaches \eqref{eq:closed} after its first iteration. Our implementation runs a generic Sinkhorn solver for a fixed 50 iterations, which returns exactly the weights in \eqref{eq:closed}; the timings in Section~\ref{sec:tradeoff} are for this solver. Evaluating \eqref{eq:closed} directly as a softmax gives the same weights at lower cost. OT-CAL keeps the $k = \lceil \gamma N\rceil$ samples with the largest weights, $S_t = \operatorname{top-}k(q)$, where $\gamma\in(0,1]$ is the sampling ratio, and normalizes their weights, $\qt_i = q_i / \sum_{j\in S_t} q_j$, for update \eqref{eq:ema}. We use $\tau = 0.1$, so $\epst = 0.11$: costs on $[0,1]^5$ typically differ by 0.1--1, so $\varepsilon = 0.01$ alone would place almost all weight on one or two samples.

\emph{What OT adds over top-$k$ ranking.} With one target, $q_i$ decreases with $C_i$, so $S_t$ holds the same samples as the $k$ nearest neighbors of $T_t$. The difference is in how the selection feeds back into the curriculum. Plain top-$k$ gives every selected sample equal influence on $\fbar_t$, so a sample that crosses the rank-$k$ threshold changes $\fbar_t$ abruptly, and \eqref{eq:ema} passes that jump on to the next selection. Under entropic OT, samples near the threshold carry smaller weights $\qt_i$, so a crossing moves $\fbar_t$ less, and the weights themselves vary smoothly with the profiles and the target (Proposition~\ref{prop:stab}). In our ablation (Section~\ref{sec:ablation}), OT selection improves accuracy over top-$k$ on identical features by 0.2--0.3 points and increases selection stability. Table~\ref{tab:notation} summarizes the notation.

\begin{table}[t]
\caption{Notation}
\label{tab:notation}
\centering
\footnotesize
\begin{tabular}{@{}p{0.22\columnwidth}p{0.7\columnwidth}@{}}
\toprule
\textbf{Symbol} & \textbf{Meaning}\\
\midrule
$N$, $K$ & Mini-batch size (128 for classification); number of classes\\
$f_i$ & Profile of sample $i$ (5-D; learned 8-D variant $z_i$)\\
$c_t$ & Curriculum vector: preferred profile at iteration $t$\\
$c_0$ & Initial curriculum: mean profile of the first mini-batch\\
$T_t$, $\sigma$ & Perturbed target $c_t + \xi_t$; noise std.\ (0.05)\\
$C_i$ & Matching cost of sample $i$, $\norm{f_i - T_t}_2^2$\\
$a$, $b$ & Source marginal ($a_i = 1/N$); target mass ($b = 1$)\\
$P$, $P^{*}$ & Transport plan; optimal plan of \eqref{eq:ot}\\
$\langle P, C\rangle$ & Total transport cost $\sum_i P_i C_i$\\
$H(P)$ & Entropy of the plan; larger means more spread out\\
$\Pi_b$ & Admissible plans: target mass $b$ exact; source relaxed by $\tau\,\mathrm{KL}(P\,\|\,a)$\\
$\varepsilon$, $\tau$ & Entropy weight (0.01); source-relaxation weight (0.1)\\
$q_i$, $\qt_i$ & Matching weight $P^{*}_i$; normalized over $S_t$\\
$\gamma$, $k$ & Sampling ratio; number selected, $k = \lceil\gamma N\rceil$\\
$S_t$ & Samples selected for training at iteration $t$\\
$\fbar_t$ & Weighted mean profile of $S_t$\\
$\eta_c$, $\eta$ & Curriculum update rate (0.05); network learning rate\\
\bottomrule
\end{tabular}
\end{table}

\subsection{Curriculum Update and Hyperparameters}\label{sec:update}
After the backward pass on $S_t$, the curriculum is updated by \eqref{eq:ema}. Because Algorithm~\ref{alg:otcal} applies \eqref{eq:ema} once per mini-batch, $\eta_c = 0.05$ gives a time constant of $1/\eta_c = 20$ iterations, about 0.05 epoch on CIFAR-100 (391 iterations per epoch at batch size 128). The curriculum therefore tracks the current profiles closely; its slows evolution over training (Section~\ref{sec:dynamics}) reflects how the profiles themselves change, not lag in the EMA. A single decay parameter avoids the tuning burden of more elaborate controllers such as PID or momentum updates. The OT-based selection and curriculum update have five hyperparameters ($\gamma$, $\varepsilon$, $\tau$, $\eta_c$, $\sigma$), and one setting is used for all experiments. The optional learned 8-D profile (Section~\ref{sec:profile}) adds only the training settings of its MLP, which are fixed as stated there.

\begin{algorithm}[t]
\caption{OT-CAL per-batch selection and curriculum update}
\label{alg:otcal}
\begin{algorithmic}[1]
\REQUIRE batch $\{x_i\}_{i=1}^N$, curriculum $c_t$, sampling ratio $\gamma$, $\varepsilon = 0.01$, $\tau = 0.1$, $\eta_c = 0.05$, $\sigma = 0.05$
\ENSURE subset $S_t$, curriculum $c_{t+1}$
\FOR{each sample $i$}
\STATE compute scaled $f_i = [D_i, L_i, G_i, U_i, V_i]^\top$ \COMMENT{forward pass only}
\ENDFOR
\STATE Sample $\xi_t \sim \mathcal{N}(0, \sigma^2 I)$; $T_t \leftarrow c_t + \xi_t$
\STATE $C_i \leftarrow \norm{f_i - T_t}_2^2$ for all $i$
\STATE $q \leftarrow \textsc{Sinkhorn}(C, \varepsilon, \tau; 50\text{ iter.})$ \COMMENT{returns \eqref{eq:closed}}
\STATE $k \leftarrow \lceil\gamma N\rceil$; $S_t \leftarrow$ indices of the $k$ largest $q_i$; $\qt_i \leftarrow q_i / \sum_{j\in S_t} q_j$
\STATE $w \leftarrow w - \eta\,\nabla_w \frac{1}{k}\sum_{i\in S_t}\ell(x_i; w)$ \COMMENT{backward on $S_t$ only}
\STATE $\fbar_t \leftarrow \sum_{i\in S_t} \qt_i f_i$
\STATE $c_{t+1} \leftarrow (1-\eta_c)\,c_t + \eta_c\,\fbar_t$
\RETURN $S_t$, $c_{t+1}$
\end{algorithmic}
\end{algorithm}

\section{Theoretical Analysis}\label{sec:theory}

We ground OT-CAL in entropic OT and biased-SGD theory. The individual tools are standard \cite{cuturi2013,genevay2016,ghadimi2013,ajalloeian2020}; the contribution is their combination for a time-varying target under closed-loop feedback. Throughout, profiles lie in $[0,1]^d$ ($d = 5$, or 8 for the learned variant). Proofs are given in the Appendix.

\newtheorem{assumption}{Assumption}
\newtheorem{proposition}{Proposition}
\newtheorem{theorem}{Theorem}
\newtheorem{corollary}{Corollary}

\begin{assumption}[Feature informativeness]\label{ass:feat}
At the current weights $w$, for all samples $i,j$: $\norm{\nabla\ell_i - \nabla\ell_j}_2 \leq C_f\norm{f_i - f_j}_2 + r$, with constant $C_f$ and residual $r$. The loss is $L$-smooth.
\end{assumption}

\begin{proposition}[Stability of the matching weights]\label{prop:stab}
Let $F, F'\in[0,1]^{N\times d}$ be profile matrices and $T, T'\in[0,1]^d$ targets. The weights \eqref{eq:closed} satisfy
\[
\norm{q(F,T) - q(F',T')}_1 \leq L_{\mathrm{OT}}\big(\norm{F - F'}_F + \norm{T - T'}_2\big),
\]
with $L_{\mathrm{OT}} = 4\sqrt{d}/\epst$.
\end{proposition}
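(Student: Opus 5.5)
The plan is to write $q = \softmax(-C/\epst)$ with $C_i = \norm{f_i - T}_2^2$. We then chain two Lipschitz estimates: one for the softmax map in $\ell_1$ with respect to $\ell_\infty$ perturbations of its input, and one for the cost vector $C$ as a function of $(F,T)$ on the box $[0,1]^d$.

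\emph{Step 1 (softmax).} For $s(u) = \softmax(u)$ the Jacobian is $J = \mathrm{diag}(s) - ss^\top$. For any direction $h$,
\[
\norm{Jh}_1 = \sum_i s_i\,\lvert h_i - \langle s,h\rangle\rvert \leq 2\norm{h}_\infty .
\]
Integrating along the segment between $u = -C/\epst$ and $u' = -C'/\epst$ gives
\[
\norm{q - q'}_1 \leq \frac{2}{\epst}\norm{C - C'}_\infty .
\]
The sharper constant $1$ is available via the mean-value bound $\lvert h_i - \langle s,h\rangle\rvert \le \max_j h_j - \min_j h_j$, but $2$ suffices here.

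\emph{Step 2 (cost).} Factor the difference of squared norms:
\[
C_i - C'_i = \langle f_i - T - (f'_i - T'),\; f_i - T + f'_i - T'\rangle .
\]
Since all profiles and targets lie in $[0,1]^d$, each of $f_i - T$ and $f'_i - T'$ has norm at most $\sqrt d$. Cauchy--Schwarz then gives
\[
\lvert C_i - C'_i\rvert \le 2\sqrt d\,\big(\norm{f_i - f'_i}_2 + \norm{T - T'}_2\big).
\]
Taking the maximum over $i$ and using $\max_i \norm{f_i - f'_i}_2 \le \norm{F - F'}_F$ yields
\[
\norm{C - C'}_\infty \le 2\sqrt d\,\big(\norm{F-F'}_F + \norm{T-T'}_2\big).
\]

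\emph{Step 3 (combine).} Plugging Step 2 into Step 1 gives the constant $(2/\epst)\cdot 2\sqrt d = 4\sqrt d/\epst = L_{\mathrm{OT}}$, which is exactly the stated bound.

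The only point needing care is the $\ell_1$ Jacobian bound in Step 1. It must be stated with $\ell_\infty$ on the input rather than $\ell_2$, so that no factor of $N$ enters and the constant stays dimension-free in $N$. The rest is routine.

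A side remark concerns the noisy target $T_t = c_t + \xi_t$, which may leave $[0,1]^d$. The statement assumes targets in the box, so Step 2 uses that assumption directly; for a general target the bound would instead carry the diameter of the actual region containing $f_i - T$.
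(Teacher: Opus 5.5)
Your proof is correct and is essentially the paper's argument: the same $\ell_\infty$-to-$\ell_1$ softmax bound with constant $2/\epst$ (obtained by integrating the Jacobian along the segment), the same factorization $C_i - C'_i = (u-v)^\top(u+v)$ with $\norm{u}_2,\norm{v}_2\le\sqrt d$, and the same step $\max_i\norm{f_i - f'_i}_2\le\norm{F-F'}_F$. On your side remark about a sharper constant: the range bound $\lvert h_i - \langle s,h\rangle\rvert \le \max_j h_j - \min_j h_j$, which is exactly what the paper uses, still only gives constant $2$ relative to $\norm{h}_\infty$; constant $1$ is true, but proving it needs the finer fact that $\sum_i s_i\lvert h_i - \langle s,h\rangle\rvert$ is at most half the range of $h$.
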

\begin{IEEEproof}
See Appendix, part 1.
\end{IEEEproof}

Proposition~\ref{prop:stab} implies that the soft weights, and hence any weighted statistic of the batch, vary continuously as the features and the curriculum drift. This is the property that the feedback loop in \eqref{eq:ema} relies on. However, this stability applies to the soft weights; hard top-$k$ selection may still introduce discontinuities when sample rankings change.

\begin{theorem}[Gradient bias]\label{thm:bias}
Let $S_t$ be the subset selected by Algorithm~\ref{alg:otcal}, $\nabla F(w)$ the mean gradient of the mini-batch, and $\nabla_{S_t}F(w)$ the mean gradient of $S_t$. Let $\mu_t$ and $\nu_t$ be the uniform distributions over the profiles of the batch and of $S_t$. Under Assumption~\ref{ass:feat},
\begin{align*}
\norm{\nabla F(w) - \nabla_{S_t}F(w)}_2 &\leq C_f\,W_1(\mu_t,\nu_t) + r\\
&\leq C_f\sqrt{d}\,\Big(1 - \frac{k}{N}\Big) + r,
\end{align*}
where $W_1$ is the 1-Wasserstein distance in profile space.
\end{theorem}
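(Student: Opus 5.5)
The plan is to write the gradient gap as a difference of expectations of the map $\phi:f_i\mapsto\nabla\ell_i$ under two empirical measures, and then use a coupling argument. Write $\nabla F(w)=\frac1N\sum_{i=1}^N\nabla\ell_i$ and $\nabla_{S_t}F(w)=\frac1k\sum_{j\in S_t}\nabla\ell_j$.

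For any coupling $\pi$ of $\mu_t$ and $\nu_t$, i.e.\ a nonnegative $N\times k$ matrix with row sums $1/N$ and column sums $1/k$, we have
\[
\nabla F(w)-\nabla_{S_t}F(w)=\sum_{i,j}\pi_{ij}\big(\nabla\ell_i-\nabla\ell_j\big).
\]
The triangle inequality and Assumption~\ref{ass:feat} then give
\[
\norm{\nabla F(w)-\nabla_{S_t}F(w)}_2\le\sum_{i,j}\pi_{ij}\big(C_f\norm{f_i-f_j}_2+r\big)=C_f\sum_{i,j}\pi_{ij}\norm{f_i-f_j}_2+r,
\]
where we used $\sum_{i,j}\pi_{ij}=1$. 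Taking the infimum over $\pi$ yields the first inequality, $C_fW_1(\mu_t,\nu_t)+r$. If the samples carry repeated profiles, we treat $\mu_t$ and $\nu_t$ as atomic measures indexed by samples. This is harmless, because the Lipschitz-type bound is stated per sample pair.

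For the second inequality we construct an explicit coupling that moves only a $(1-k/N)$ fraction of the mass. Since $S_t$ is a subset of the batch, we write
\[
\mu_t=\tfrac kN\,\nu_t+\big(1-\tfrac kN\big)\rho,
\qquad
\rho=\tfrac{1}{N-k}\sum_{i\notin S_t}\delta_{f_i}.
\]
One checks this directly: each selected point gets mass $\frac kN\cdot\frac1k=\frac1N$, and each unselected point gets $\frac{N-k}{N}\cdot\frac{1}{N-k}=\frac1N$. Now couple $\frac kN\nu_t$ with itself along the identity, at zero cost. Then transport the remaining $(1-\frac kN)\rho$ onto the remaining $(1-\frac kN)\nu_t$ by an arbitrary plan, for instance the product plan.

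Every pair of points lies in $[0,1]^d$, so each unit of moved mass costs at most $\mathrm{diam}([0,1]^d)=\sqrt d$. This gives $W_1(\mu_t,\nu_t)\le\sqrt d\,(1-k/N)$. Equivalently, by convexity of $W_1$ in its first argument, $W_1(\mu_t,\nu_t)\le\frac kN W_1(\nu_t,\nu_t)+(1-\frac kN)W_1(\rho,\nu_t)\le(1-\frac kN)\sqrt d$. Multiplying by $C_f$ and adding $r$ finishes the proof.

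The main point needing care is the first step: justifying that Assumption~\ref{ass:feat} applies to every pair $(i,j)$ with $i$ in the batch and $j\in S_t$, and keeping the two roles of the index (a sample versus its profile) consistent so that the empirical measures are well defined. The rest is routine. Note that neither the $L$-smoothness assumption nor the specific OT weights (Proposition~\ref{prop:stab}) is needed here: the bound holds for any size-$k$ subset, and the choice made by Algorithm~\ref{alg:otcal} affects only how small $W_1(\mu_t,\nu_t)$ is in practice, not the worst case.
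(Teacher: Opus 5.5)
Your proposal is correct and takes essentially the same route as the paper's proof. Like the paper, you write the gradient gap via an arbitrary coupling of $\mu_t$ and $\nu_t$, apply the triangle inequality and Assumption~\ref{ass:feat} pairwise, and minimize over couplings; for the second bound you leave the shared mass $k/N$ on $S_t$ in place and move the remaining $1-k/N$ by at most $\sqrt{d}$. Your decomposition $\mu_t = \tfrac{k}{N}\nu_t + (1-\tfrac{k}{N})\rho$ simply makes the paper's ``keep the shared mass in place'' step explicit; the case $k=N$, where $\mu_t=\nu_t$ and $\rho$ is undefined, is trivial.
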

\begin{IEEEproof}
See Appendix, part 2.
\end{IEEEproof}

The bound has two parts. The first is the price of the curriculum: OT-CAL deliberately selects samples whose profiles differ from the batch, and the bias grows with that difference, scaled by how closely profile distances track gradient distances ($C_f$); it vanishes as $k\to N$. The second, $r$, is the part of gradient variation that the profile cannot explain. Lower $C_f$ and $r$ tighten the bound, which motivates the learned 8-D profile in Section~\ref{sec:profile}.

\begin{proposition}[Curriculum contraction]\label{prop:contr}
Write the update \eqref{eq:ema} as $c_{t+1} = \Phi(c_t)$ with $\Phi(c) = (1-\eta_c)c + \eta_c\fbar(c)$. If $\fbar$ is $L_{\fbar}$-Lipschitz in $c$, then $\Phi$ is Lipschitz with constant $\rho \leq 1 - \eta_c(1 - L_{\fbar})$, and is a contraction whenever $L_{\fbar} < 1$. Perturbations of the curriculum then decay geometrically at rate $\rho$, with $\rho\to 1-\eta_c = 0.95$ as $L_{\fbar}\to 0$.
\end{proposition}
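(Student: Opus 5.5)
The plan is a direct triangle-inequality argument on the map $\Phi(c) = (1-\eta_c)c + \eta_c\fbar(c)$, followed by the Banach fixed-point theorem on a compact convex domain. Fix any two curricula $c, c'$. The linear part and the feedback part separate:
\[
\norm{\Phi(c)-\Phi(c')}_2 \leq (1-\eta_c)\norm{c-c'}_2 + \eta_c\norm{\fbar(c)-\fbar(c')}_2 .
\]
Applying the hypothesis that $\fbar$ is $L_{\fbar}$-Lipschitz gives
\[
\norm{\Phi(c)-\Phi(c')}_2 \leq \big(1-\eta_c+\eta_c L_{\fbar}\big)\norm{c-c'}_2 = \big(1-\eta_c(1-L_{\fbar})\big)\norm{c-c'}_2 .
\]
This is exactly $\rho \leq 1-\eta_c(1-L_{\fbar})$. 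Since $0<\eta_c\leq 1$, we have $\rho<1$ precisely when $L_{\fbar}<1$, so $\Phi$ is then a contraction.

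Next I would check that $\Phi$ maps a complete set to itself, so that Banach applies. Profiles lie in $[0,1]^d$, and $\fbar(c)$ is a convex combination of profiles (the weights $\qt_i$ are nonnegative and sum to one), so $\fbar(c)\in[0,1]^d$. The initialization $c_0$ is also a mean of profiles, so $c_0\in[0,1]^d$. Because $\Phi(c)$ is a convex combination of $c$ and $\fbar(c)$, the cube $[0,1]^d$ is invariant under $\Phi$. It is closed, hence complete, so a unique fixed point $c^\ast$ exists. Iterating the contraction bound then gives the geometric decay of perturbations:
\[
\norm{c_t-c^\ast}_2\leq\rho^t\norm{c_0-c^\ast}_2, \qquad \norm{\Phi^t(c)-\Phi^t(c')}_2\leq\rho^t\norm{c-c'}_2 .
\]
The limit statement is immediate: as $L_{\fbar}\to 0$, the bound $1-\eta_c(1-L_{\fbar})$ tends to $1-\eta_c=0.95$.

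The main obstacle is not the algebra but what "$\fbar$ is Lipschitz in $c$" means, since the proposition only assumes it. Two issues need care.

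\begin{itemize}
\item \emph{Noise and a fixed batch.} The perturbation $\xi_t$ and the batch must be held fixed, so $\Phi$ is read as a map at a frozen batch and noise realization. Otherwise the same argument only bounds the expected contraction.
\item \emph{Hard top-$k$ selection.} Top-$k$ makes $\fbar$ discontinuous wherever the ranking changes, so the Lipschitz hypothesis holds at best on regions of fixed ranking.
\end{itemize}

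I would state the result on such a region. I would also remark that on such a region, Proposition~\ref{prop:stab} together with the boundedness of profiles gives one admissible constant. Since $\fbar=\sum_{i\in S}\qt_i f_i$ with $\norm{f_i}_2\leq\sqrt d$, and renormalizing the weights over $S$ inflates differences by at most a factor $1/\sum_{j\in S}q_j$, we get
\[
L_{\fbar}\lesssim \sqrt d\, L_{\mathrm{OT}} \Big/ \sum_{j\in S}q_j .
\]
This makes clear that $L_{\fbar}<1$ is a genuine condition and not automatic, consistent with the paper's caveat.
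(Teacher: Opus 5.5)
Your proof is correct and follows the paper's own argument. You use the same triangle-inequality split of $\Phi(c)-\Phi(c')$ to get $\rho\le 1-\eta_c(1-L_{\fbar})$, then Banach's fixed-point theorem for the decay $\norm{c_t-c^{*}}_2\le\rho^t\norm{c_0-c^{*}}_2$, with the same caveats the paper states: frozen noise and profiles, and top-$k$ making $\fbar$ only piecewise Lipschitz. Your check that $[0,1]^d$ is $\Phi$-invariant is unnecessary when the Lipschitz hypothesis holds globally, since the paper implicitly works in the complete space $\mathbb{R}^d$; it is the right move only if you trust the Lipschitz bound on the cube alone, and if you restricted to a fixed-ranking region you would also need that region to be $\Phi$-invariant, which is not automatic.
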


For the soft-weighted mean $\fbar(c) = \sum_i q_i(c) f_i$, Proposition~\ref{prop:stab} gives $L_{\fbar}\leq 2d/\epst$ (Appendix, part 3). This worst-case bound is far above 1 for our $\epst = 0.11$, so Proposition~\ref{prop:stab} does not by itself certify contraction; Proposition~\ref{prop:contr} states when the loop is guaranteed to settle. Top-$k$ truncation in \eqref{eq:ema} makes $\fbar$ only piecewise Lipschitz, so the statement describes the soft surrogate between threshold crossings.

\begin{corollary}[Biased SGD]\label{cor:sgd}
Under $L$-smoothness and bounded gradient variance, SGD whose gradient bias is at most $B = C_f W_1(\mu_t,\nu_t) + r$ (Theorem~\ref{thm:bias}) reaches, at the usual $O(1/\sqrt{n})$ rate after $n$ iterations, a region where $\mathbb{E}\norm{\nabla F(w)}_2^2 = O(B^2)$ \cite{ghadimi2013,ajalloeian2020}.
\end{corollary}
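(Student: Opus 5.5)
The plan is to reduce Corollary~\ref{cor:sgd} to the standard biased-SGD analysis of \cite{ajalloeian2020}, feeding in Theorem~\ref{thm:bias} as the bias bound. The stochastic gradient used by Algorithm~\ref{alg:otcal} is $g_t = \nabla_{S_t}F(w_t)$. Split it as $g_t = \nabla F(w_t) + b_t + \zeta_t$. Here $b_t = \mathbb{E}_t[g_t] - \nabla F(w_t)$ is the systematic bias and $\zeta_t$ is a zero-mean noise term. Theorem~\ref{thm:bias} holds pointwise for every selected subset, so Jensen's inequality gives $\norm{b_t}_2 \le \mathbb{E}_t\norm{\nabla_{S_t}F - \nabla F}_2 \le B$. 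Here $\nabla F$ is read as the full-objective gradient and the mini-batch sampling error is absorbed into $\zeta_t$. Bounded variance, $\mathbb{E}_t\norm{\zeta_t}^2\le \sigma_g^2$, is assumed directly.

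Next comes the one-step descent inequality from $L$-smoothness (Assumption~\ref{ass:feat}). With step size $\eta \le 1/L$,
\[
F(w_{t+1}) \le F(w_t) - \eta\langle \nabla F(w_t), g_t\rangle + \tfrac{L\eta^2}{2}\norm{g_t}^2 .
\]
Taking $\mathbb{E}_t$, I will use $-\langle \nabla F, \nabla F + b_t\rangle \le -\tfrac12\norm{\nabla F}^2 + \tfrac12\norm{b_t}^2$, which follows from $\langle x,y\rangle \ge -\tfrac12\norm{x}^2-\tfrac12\norm{y}^2$ after expanding. I will also bound $\mathbb{E}_t\norm{g_t}^2 \le 2\norm{\nabla F}^2 + 2B^2 + \sigma_g^2$. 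Choosing $\eta \le 1/(4L)$ keeps a $-\tfrac{\eta}{4}\norm{\nabla F}^2$ term. Telescoping over $n$ iterations and dividing by $n\eta$ then yields
\[
\frac1n\sum_{t<n}\mathbb{E}\norm{\nabla F(w_t)}^2 \le \frac{4(F(w_0)-F^\star)}{n\eta} + c_1 B^2 + c_2 L\eta\sigma_g^2 .
\]

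To finish, take $\eta = \min\{1/(4L),\, c/\sqrt{n}\}$. This gives the familiar $O(1/\sqrt{n})$ decay of the first and third terms, as in \cite{ghadimi2013}. The average squared gradient, and hence the minimum over iterates or a randomly chosen iterate, therefore enters a region of size $O(B^2)$.

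The main obstacle is the bias/noise split, not the descent calculation. $S_t$ depends on $c_t$ and on $w_t$ through the profiles, and $c_t$ itself depends on the past. So $b_t$ is not a fixed function of $w_t$, and $\zeta_t$ may not be a martingale difference in the usual sense. I would handle this by conditioning on the filtration $\mathcal{F}_t$ generated by $(w_s,c_s)_{s\le t}$. Theorem~\ref{thm:bias} is uniform over subsets, so the bound $\norm{b_t}\le B$ holds conditionally regardless of how $S_t$ was chosen. The descent argument needs nothing more, because it only uses conditional expectations at each step. If $W_1(\mu_t,\nu_t)$ varies with $t$, I would replace $B^2$ by $\sup_t B_t^2$ or by the average $\frac1n\sum_t B_t^2$.
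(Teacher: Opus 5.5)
Your proposal is correct and follows the paper's route. The paper gives no separate argument: it plugs the bias bound of Theorem~\ref{thm:bias} into the standard biased-SGD result of Ghadimi--Lan and Ajalloeian--Stich (constant bias, no multiplicative bias term), and your descent-lemma, telescoping and $\eta=\min\{1/(4L),c/\sqrt{n}\}$ calculation is that result written out. One small tidy-up: in the Jensen step, compare $\nabla_{S_t}F$ with the mini-batch mean gradient, which is what Theorem~\ref{thm:bias} controls, and use $\mathbb{E}_t[\nabla F_{\mathrm{batch}}]=\nabla F$ to write $b_t=\mathbb{E}_t[\nabla_{S_t}F-\nabla F_{\mathrm{batch}}]$, since $\norm{\nabla_{S_t}F-\nabla F}_2\le B$ does not hold pointwise against the full gradient.
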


Existing biased-SGD analyses treat the bias as fixed or uniformly bounded. Combining Corollary~\ref{cor:sgd} with Theorem~\ref{thm:bias} links the neighborhood size directly to feature quality $C_f$, connecting feature design (Section~\ref{sec:profile}) to optimization.

\section{Experiments}\label{sec:experiments}

We evaluate OT-CAL as an adaptive data-selection method from two angles: whether it improves the accuracy--efficiency tradeoff of online batch selection (Section~\ref{sec:tradeoff}), and which of its components are responsible (Section~\ref{sec:ablation}).Section~\ref{sec:dynamics} then describes how the selection evolves during training.

\subsection{Experimental Setup}\label{sec:setup}

\emph{Datasets and models.} CIFAR-100 \cite{krizhevsky2009} (50k training and 10k test images, $32\times32$, 100 classes) with a CIFAR-style ResNet-18 \cite{he2016} ($3\times3$ stem, no max-pooling). Tiny-ImageNet \cite{le2015} (100k training images, $64\times64$, 200 classes; we report on the 10k validation set because test labels are not public) with a ResNet-50 trained from scratch with the same CIFAR-style stem, and with a ViT-B/16 \cite{dosovitskiy2021} initialized from the AugReg ImageNet-21k checkpoint \cite{steiner2022} fine-tuned on ImageNet-1k (timm: \path{vit_base_patch16_224.augreg2_in21k_ft_in1k}) and fine-tuned at the native $64\times64$ resolution ($4\times4$ patches, interpolated position embeddings). Because ImageNet pretraining overlaps the images of Tiny-ImageNet, the ViT results measure fine-tuning efficiency rather than learning from scratch. COCO 2017 \cite{lin2014} detection (118k train2017 and 5k val2017 images) with DETR-R50 \cite{carion2020}.

\emph{Training.} The ResNets use SGD (momentum 0.9) with initial learning rate 0.1, decayed by $10\times$ at 75\% and 90\% of training, weight decay $5\times10^{-4}$, batch size 128, and no warm-up, for 250 epochs on CIFAR-100 and 300 epochs on Tiny-ImageNet, with random cropping (padding 4 on CIFAR-100, 8 on Tiny-ImageNet) and horizontal flipping. ViT-B/16 is fine-tuned for 30 epochs with SGD (momentum 0.9), learning rate 0.01 with a cosine schedule and 500-step warm-up, no weight decay, gradient clipping at 1.0, and batch size 128. DETR follows the original recipe \cite{carion2020}: shorter image side 800 (longer $\leq 1333$), ImageNet-pretrained backbone, AdamW with learning rate $10^{-4}$ ($10^{-5}$ for the backbone), weight decay $10^{-4}$, gradient clipping at 0.1, and 500 epochs with a $10\times$ decay at epoch 400. All methods train for the same number of epochs; subset methods reduce only the backward pass.

\emph{OT-CAL settings.} Sampling ratio $\gamma = 0.5$ unless stated, $\varepsilon = 0.01$ with 50 Sinkhorn iterations, $\tau = 0.1$, $\eta_c = 0.05$, and $\sigma = 0.05$, identical across datasets. Table~\ref{tab:main} uses the learned 8-D profile; ablations state the profile they use.

\emph{Baselines.} Full-data training (upper bound); AdaSelection \cite{zhang2023}; selective backpropagation (SB) \cite{jiang2019} with selectivity $\beta = 1$, which keeps a sample with probability equal to its loss percentile and so keeps half the batch ($\beta = 3$ for $\gamma = 0.25$); DataMap-based selection (DM) \cite{swayamdipta2020}, computed online from a per-sample EMA of the true-class probability (confidence) and its standard deviation across epochs (variability), keeping the $k$ most variable samples in each batch; and GRAD-MATCH \cite{killamsetty2021gradmatch} (per-batch variant, last-layer gradients, subset re-selected every 20 epochs) on CIFAR-100 only.

\emph{Protocol.} We report mean $\pm$ standard deviation over 10 seeds on CIFAR-100 and 5 seeds in the other settings, and compare OT-CAL with AdaSelection, SB, and DM in each setting using paired $t$-tests with Holm correction over the 12 comparisons. Accuracy is final-epoch top-1 accuracy on the CIFAR-100 test set and the Tiny-ImageNet validation set; COCO reports box mAP on val2017. Speedup is the ratio of full-data to method end-to-end wall-clock training time (data loading, forward, selection, backward, and optimizer step) for the same number of epochs on the same hardware. Experiments run on NVIDIA A100 80GB GPUs on the Purdue Gilbreth cluster.

\subsection{Accuracy--Efficiency Tradeoff}\label{sec:tradeoff}

\begin{table}[t]
\caption{Accuracy (\%) and Speedup at a 50\% Sampling Ratio}
\label{tab:main}
\centering
\scriptsize
\setlength{\tabcolsep}{3pt}
\begin{tabular}{@{}lccccccc@{}}
\toprule
\textbf{Dataset / model} & \textbf{Full} & \textbf{AdaSel} & \textbf{SB} & \textbf{DM} & \textbf{OT-CAL} & $\boldsymbol{\Delta}$ & \textbf{Speedup}\\
\midrule
CIFAR-100 / R18 & 77.5 & 74.2 & 74.5 & 74.4 & \textbf{75.8}$\pm$0.3 & +1.6 & $1.24\times^{\dagger}$\\
Tiny-IN / R50   & 64.3 & 62.8 & 63.0 & 62.9 & \textbf{64.9}$\pm$0.4 & +2.1 & $1.22\times$\\
Tiny-IN / ViT-B & 68.5 & 67.1 & 67.4 & 67.3 & \textbf{68.7}$\pm$0.3 & +1.6 & $1.18\times$\\
COCO / DETR     & 42.0 & 41.6 & 41.8 & 41.7 & \textbf{42.9}$\pm$0.3 & +1.3 & $1.18\times$\\
\bottomrule
\multicolumn{8}{@{}p{0.97\columnwidth}@{}}{\scriptsize Mean $\pm$ std over 10 seeds (CIFAR-100) or 5 seeds (others); std shown for OT-CAL. COCO reports box mAP. $\Delta$ is relative to AdaSelection (points); all differences significant ($p_{\mathrm{adj}} < 0.05$, Holm). SB: selective backpropagation \cite{jiang2019}; DM: DataMap \cite{swayamdipta2020}. OT-CAL uses learned 8-D profiles. $^{\dagger}$CIFAR-100 speedup measured with the fixed 5-D profile (Fig.~\ref{fig:speed}); the learned 8-D variant adds a small MLP cost and has not been re-timed.}
\end{tabular}
\end{table}

\emph{Accuracy.} At a 50\% sampling ratio, OT-CAL outperforms AdaSelection by 1.3--2.1 points in all four settings (Table~\ref{tab:main}): by 1.6 points on CIFAR-100 with ResNet-18, by 2.1 and 1.6 points on Tiny-ImageNet with ResNet-50 and ViT-B/16, and by 1.3 mAP on COCO. It also outperforms selective backpropagation and DataMap selection everywhere. With ResNet-50 on Tiny-ImageNet and on COCO, OT-CAL slightly exceeds full-data training at the same number of epochs (64.9\% vs.\ 64.3\%; 42.9 vs.\ 42.0 mAP), and with ViT-B/16 it matches full-data training (68.7\% vs.\ 68.5\%, within one standard deviation). We attribute this, tentatively, to a regularizing effect: these are the settings most prone to overfitting (ResNet-50 trained from scratch on 100k images, and DETR's long schedule), and down-weighting samples the model has already learned limits it. On CIFAR-100 a gap of 1.7 points remains.

\emph{Training efficiency.} OT-CAL trains $1.18$--$1.24\times$ faster than full-data training in wall-clock time. Because the backward pass costs about twice the forward pass and the forward pass still runs on every sample, halving the backward pass bounds the speedup at about $1.5\times$; the measured speedups sit below this bound. Fig.~\ref{fig:speed} shows the resulting accuracy--efficiency tradeoff on CIFAR-100, using the fixed 5-D informativeness profile. At $\gamma = 0.5$, OT-CAL is more accurate than GRAD-MATCH and AdaSelection at similar speedup. At $\gamma = 0.25$, it comes within 0.2 points of AdaSelection at $\gamma = 0.5$ while running faster ($1.40\times$ vs.\ $1.30\times$, against a $2\times$ bound), so lowering $\gamma$ trades accuracy for speed gradually.

\begin{figure}[t]
\centerline{\includegraphics[width=\columnwidth]{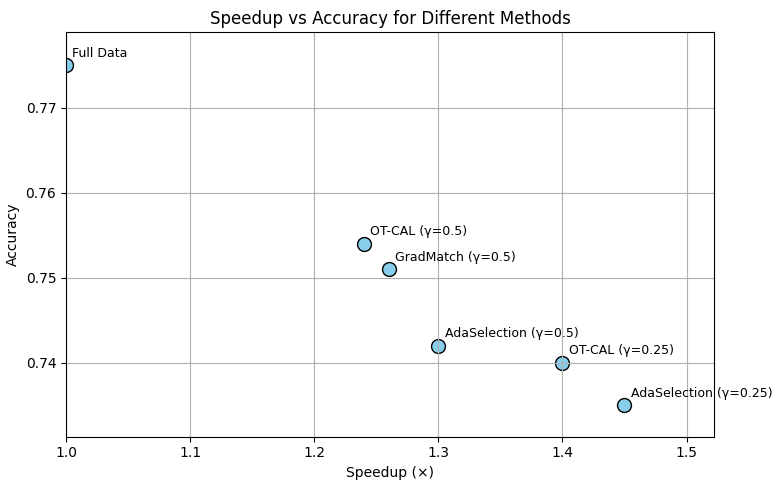}}
\caption{Accuracy--efficiency tradeoff on CIFAR-100 (ResNet-18) for sampling ratios $\gamma\in\{0.25, 0.5\}$; upper left is better. OT-CAL points use the fixed 5-D profile (75.4\% at $1.24\times$ for $\gamma = 0.5$); the learned 8-D result of Table~\ref{tab:main} (75.8\%) is not plotted.}
\label{fig:speed}
\end{figure}

\emph{Computational overhead.} On CIFAR-100 with batch size 128, computing the profiles takes about 0.3~ms and the Sinkhorn step (50 iterations) about 1.2~ms per mini-batch, together about 1.5~ms of a 32.5~ms OT-CAL training step on the A100, or about 5\% (a full-data step takes about 40~ms). For reference, a full-data step on a batch of 128 costs about 427~GFLOPs and an OT-CAL step at $\gamma = 0.5$ about 284~GFLOPs; at the A100's FP32 peak (19.5~TFLOPS) these would take 21.9~ms and 14.6~ms, so the measured steps run at about twice the FP32 compute floor. Profiles cost $O(N)$, and each Sinkhorn iteration costs $O(NM)$ for $M$ target points, which is $O(N)$ for the single target used here; the final top-$k$ costs $O(N\log N)$. Memory overhead is one loss-EMA scalar per training example plus the current batch's profiles. Gradient-matching coresets are far more expensive: storing per-sample last-layer gradients for ResNet-50 on Tiny-ImageNet ($100\text{k samples}\times 2048\times 200$ values, about 164~GB in 32-bit floats) is impractical, so we run GRAD-MATCH only on CIFAR-100.

\subsection{Component Ablation}\label{sec:ablation}

We separate the contributions of the feature representation, the OT selection mechanism, and the individual features, on CIFAR-100 with ResNet-18.

\begin{table}[t]
\caption{Feature Representation vs.\ Selection Mechanism (CIFAR-100, ResNet-18)}
\label{tab:repsel}
\centering
\footnotesize
\begin{tabular}{@{}llccc@{}}
\toprule
\textbf{Profile} & \textbf{Selector} & \textbf{Acc.\ (\%)} & $\boldsymbol{R^2}$ & \textbf{Hamming}\\
\midrule
Fixed 5-D   & Top-$k$ & 75.1 & 0.68 & n/a\\
Fixed 5-D   & OT      & 75.4 & 0.68 & n/a\\
Learned 8-D & Top-$k$ & 75.6 & 0.83 & 0.88\\
Learned 8-D & OT      & \textbf{75.8} & 0.83 & \textbf{0.94}\\
\bottomrule
\multicolumn{5}{@{}p{0.97\columnwidth}@{}}{\scriptsize $R^2$: linear-regression fit of last-layer per-sample gradient norms from the profile, on a random 10k training subset, averaged over checkpoints taken every 25 epochs. Hamming: selection similarity on a fixed probe set of 1,024 training samples, $1 - \mathrm{mean}\,|m_t - m_{t+1}|$ between the selection masks of consecutive iterations, averaged over training; not logged for the 5-D runs (n/a). Mean over 10 seeds.}
\end{tabular}
\end{table}

\emph{Representation and selection.} Table~\ref{tab:repsel} shows that both parts contribute. OT-based selection outperforms direct top-$k$ selection for both representations: by 0.3 points with the fixed 5-D profile (75.1\% to 75.4\%) and by 0.2 points with the learned 8-D profile (75.6\% to 75.8\%), where it also raises selection stability from 0.88 to 0.94. The learned representation improves accuracy under both selectors, by 0.5 points with top-$k$ and 0.4 points with OT, and raises the feature--gradient $R^2$ from 0.68 to 0.83, in line with the role of feature quality in Theorem~\ref{thm:bias}. The gains from OT are modest but consistent.

\begin{table}[t]
\caption{Per-Feature Ablation (Fixed 5-D, OT Selection; CIFAR-100, ResNet-18)}
\label{tab:feat}
\centering
\footnotesize
\begin{tabular}{@{}lccc@{}}
\toprule
\textbf{Feature removed} & \textbf{Acc.\ (\%)} & \textbf{Drop (pp)} & \textbf{Share}\\
\midrule
None (full profile) & 75.4 & --- & ---\\
Learnability ($L$) & 74.1 & $-1.3$ & 41\%\\
Variety ($V$) & 74.6 & $-0.8$ & 25\%\\
Uncertainty ($U$) & 74.8 & $-0.6$ & 19\%\\
Gradient proxy ($G$) & 75.1 & $-0.3$ & 9\%\\
Difficulty ($D$) & 75.2 & $-0.2$ & 6\%\\
\bottomrule
\end{tabular}
\end{table}

\emph{Individual features.} Table~\ref{tab:feat} removes one feature at a time, retraining each variant from scratch (10 seeds). Every removal lowers accuracy. Learnability matters most ($-1.3$ points, 41\% of the summed drops), followed by variety ($-0.8$, 25\%), uncertainty ($-0.6$, 19\%), gradient proxy ($-0.3$, 9\%), and difficulty ($-0.2$, 6\%). The trajectory-based signal contributes more than the magnitude-based ones, consistent with findings on example forgetting and learnability \cite{toneva2019,mindermann2022}.

\subsection{Adaptive Curriculum Dynamics}\label{sec:dynamics}

\begin{figure}[t]
\centerline{\includegraphics[width=\columnwidth]{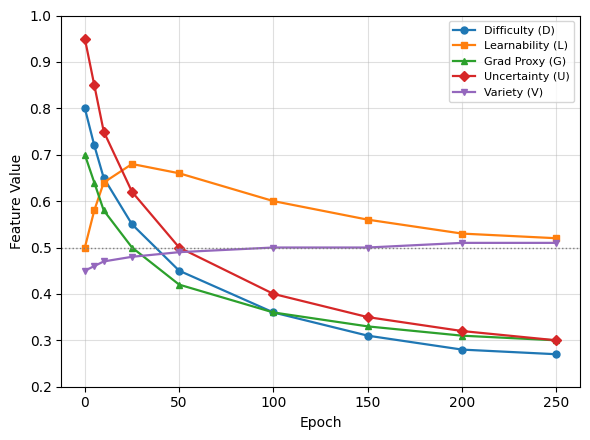}}
\caption{Components of the curriculum vector $c_t$ for the fixed 5-D informativeness profile, derived from the feature definitions in Section~\ref{sec:profile}. The dotted line marks $L=0.5$, i.e., no change in loss.}
\label{fig:curriculum}
\end{figure}

Fig.~\ref{fig:curriculum} sketches the behavior we expect from the feature definitions; it is a schematic, not a measured trajectory. At initialization the network predicts almost uniformly, so uncertainty and difficulty are near their maximum, and learnability is exactly 0.5 because the loss EMA starts at each sample's first loss. As loss falls, difficulty, uncertainty, and the gradient proxy decrease, while learnability rises above 0.5 and then relaxes back toward 0.5 once losses stop changing; variety changes only slightly. The curriculum therefore shifts from hard, uncertain samples toward samples whose loss is still falling, without a phase schedule. We have not logged $c_t$ in our runs, so this figure does not show that this shift causes the accuracy gains in Table~\ref{tab:main}.

\section{Limitations}

OT-CAL skips backward passes but still runs the forward pass on every sample, which caps the attainable speedup at a given $\gamma$. On CIFAR-100 it does not reach full-data accuracy. The variety feature relies on class labels, so self-supervised pretraining would need a different diversity signal. The guarantees rest on Assumption~\ref{ass:feat}, which the 5-D profile satisfies only partially ($R^2 = 0.68$, Table~\ref{tab:repsel}); the gradient-bias bound grows with how far the selected profiles are from the batch's; and our worst-case bound does not certify the contraction condition of Proposition~\ref{prop:contr} for the $\epst$ we use. The curriculum dynamics in Section~\ref{sec:dynamics} are schematic, not measured, and do not establish a causal link to accuracy. Finally, we have not yet evaluated full ImageNet-1K training or fine-tuning of foundation models.

\section{Conclusion}

OT-CAL merges curriculum learning and online subset selection by matching mini-batch informativeness profiles to an evolving curriculum target with entropic optimal transport. The target is updated by an EMA of the selected profiles, forming a closed loop whose selections shift from hard samples to improving samples to a class-balanced mix without a manual schedule, at $O(N\log N)$ per-batch cost and without per-sample gradients. The analysis links transport-plan stability, feature--gradient alignment, and biased-SGD convergence. Experiments on CIFAR-100, Tiny-ImageNet, and COCO show gains of 1.3--2.1 points over AdaSelection at $1.18$--$1.24\times$ speedup, with learnability and variety as the dominant features and modest but consistent gains from OT selection over top-$k$ ranking. Extending OT-CAL to foundation-model fine-tuning and to label-free diversity signals is a natural next step.

\section*{Appendix: Proofs}

\subsubsection{Proposition~\ref{prop:stab}}
Write $q = \softmax(-C/\epst)$ as in \eqref{eq:closed} and let $\delta = C' - C$. Along $C(s) = C + s\delta$, $s\in[0,1]$, the softmax derivative is $\mathrm{d}q_i/\mathrm{d}s = -(1/\epst)\,q_i(\delta_i - \bar\delta)$, with $\bar\delta = \sum_j q_j\delta_j$. Since $\bar\delta$ lies between $\min_j\delta_j$ and $\max_j\delta_j$,
\[
\norm{\mathrm{d}q/\mathrm{d}s}_1 \leq \frac{1}{\epst}\big(\max_j\delta_j - \min_j\delta_j\big) \leq \frac{2}{\epst}\norm{\delta}_\infty;
\]
integrating over $s$ gives $\norm{q(C') - q(C)}_1 \leq (2/\epst)\norm{C' - C}_\infty$. For the costs, put $u = f_i - T$ and $v = f'_i - T'$. Then
\begin{align*}
|C_i - C'_i| &= |(u-v)^\top(u+v)| \leq \norm{u-v}_2\norm{u+v}_2\\
&\leq 2\sqrt{d}\,\big(\norm{f_i - f'_i}_2 + \norm{T - T'}_2\big),
\end{align*}
because every vector lies in $[0,1]^d$, so $\norm{u}_2, \norm{v}_2 \leq \sqrt{d}$. Using $\max_i\norm{f_i - f'_i}_2 \leq \norm{F - F'}_F$ and combining the two bounds proves the claim with $L_{\mathrm{OT}} = 4\sqrt{d}/\epst$.\hfill$\blacksquare$

\subsubsection{Theorem~\ref{thm:bias}}
Let $\pi$ be any coupling of $\mu_t$ and $\nu_t$: $\pi_{ij}\geq 0$ with $\sum_j\pi_{ij} = 1/N$ for every batch sample $i$ and $\sum_i \pi_{ij} = 1/k$ for every $j\in S_t$. Then $\nabla F(w) - \nabla_{S_t}F(w) = \sum_{i,j}\pi_{ij}(\nabla\ell_i - \nabla\ell_j)$, so by the triangle inequality and Assumption~\ref{ass:feat},
\[
\norm{\nabla F(w) - \nabla_{S_t}F(w)}_2 \leq C_f\sum_{i,j}\pi_{ij}\norm{f_i - f_j}_2 + r.
\]
Minimizing over $\pi$ gives the first inequality. For the second, keep the mass that $\mu_t$ and $\nu_t$ share in place: each sample in $S_t$ has mass $1/N$ under $\mu_t$ and $1/k$ under $\nu_t$, so a fraction $1 - k/N$ of the mass moves, by at most the diameter $\sqrt{d}$ of $[0,1]^d$. Hence $W_1(\mu_t,\nu_t) \leq \sqrt{d}\,(1 - k/N)$.\hfill$\blacksquare$

\subsubsection{Proposition~\ref{prop:contr} and the bound on $L_{\fbar}$}
$\Phi(c) - \Phi(c') = (1-\eta_c)(c - c') + \eta_c(\fbar(c) - \fbar(c'))$, so $\norm{\Phi(c) - \Phi(c')}_2 \leq (1 - \eta_c + \eta_c L_{\fbar})\norm{c - c'}_2 = \rho\norm{c - c'}_2$. If $L_{\fbar} < 1$, then $\rho < 1$ and, by Banach's fixed-point theorem, $\Phi$ has a unique fixed point $c^{*}$ with $\norm{c_t - c^{*}}_2 \leq \rho^t\norm{c_0 - c^{*}}_2$; a perturbation of the curriculum shrinks by at least the factor $\rho$ per update. The argument holds for a fixed set of profiles; it does not account for the drift of the profiles during training. For the soft-weighted mean $\fbar(c) = \sum_i q_i(c) f_i$, the weights sum to one, so for the cube center $m = (\tfrac12,\dots,\tfrac12)$,
\begin{align*}
\norm{\fbar(c) - \fbar(c')}_2 &= \Big\lVert\sum_i (q_i - q'_i)(f_i - m)\Big\rVert_2\\
&\leq \frac{\sqrt{d}}{2}\norm{q - q'}_1 \leq \frac{2d}{\epst}\norm{c - c'}_2,
\end{align*}
by Proposition~\ref{prop:stab} with the same noise draw (so $T - T' = c - c'$).\hfill$\blacksquare$


\end{document}